\newcommand{\rev}[1]{{\color{black}#1}}
\newtheorem{theorem}{Theorem}[section]
\newtheorem{lemma}[theorem]{Lemma}
\newtheorem{conjecture*}{Conjecture}
\newcommand{\calP}{\mathcal{P}}
\newcommand{\calN}{\mathcal{N}}
\newcommand{\E}{\mathbb{E}}
\newcommand{\R}{\mathbb{R}}
\newcommand{\W}[0]{\mathcal{W}_2}
\begin{document}

\title{Flow-Based Generative Models as  Iterative Algorithms in Probability Space}

\author{ Yao Xie,~\IEEEmembership{Senior~Member,~IEEE}, Xiuyuan Cheng
\thanks{Yao Xie is with the H. Milton Stewart School of Industrial and Systems Engineering (ISyE) at the Georgia Institute of Technology. Xiuyuan Cheng is with the Mathematics Department at Duke University.}
}

\markboth{IEEE Signal Processing Magazine,~Vol.~XX, No.~XX, July~2024}%
{\MakeLowercase{\textit{et al.}}: Author Guidelines for Special Issue Articles of IEEE SPM}

\maketitle



\vspace{-0.5in}
\section{Introduction}

Generative AI (GenAI) has revolutionized data-driven modeling by enabling the synthesis of high-dimensional data in fields such as image generation, large language models (LLMs), biomedical signal processing, and anomaly detection. Among GenAI approaches, diffusion-based (see, e.g., \cite{song2021score,ho2020denoising,song2021scorebased}) and flow-based \cite{nflow_review,dinh2014nice,dinh2016density,kingma2018glow,grathwohl2018ffjord,chen2018neural,lipman2023flow,albergo2023building,liu2022rectified,xu2022jko} generative models have gained prominence due to their ability to model complex distributions sample generation and  density estimation.


Flow-based models leverage invertible mappings governed by Ordinary Differential Equations (ODEs), unlike diffusion models, which rely on iterative denoising through Stochastic Differential Equations (SDEs). The design of flow-based models provides a direct and deterministic transformation between probability distributions, enabling exact likelihood estimation and fast sampling. This makes them particularly well-suited for tasks requiring density estimation and likelihood evaluation, such as anomaly detection and probabilistic inference.
While empirical advancements in generative models have been substantial, a deeper understanding of the design and mathematical foundations of flow-based generative models will enable both theoreticians and practitioners to broaden their adoption for diverse applications in signal processing and leverage them as a general representation of high-dimensional distributions. 


This tutorial presents an intuitive mathematical framework for flow-based generative models, viewing them as neural network-based representations of continuous probability densities. These models can be cast as particle-based iterative algorithms in probability space using the Wasserstein metric, providing both theoretical guarantees and computational efficiency. Based on this framework, we establish the convergence of the iterative algorithm and show the generative guarantee, ensuring that under suitable conditions, the learned density approximates the true distribution. By systematically building from fundamental concepts to state-of-the-art research, our goal is to guide the audience from a basic understanding to the research frontiers of generative modeling, demonstrating its impact on signal processing, machine learning, and beyond.

\begin{figure}[t]
\vspace{-0.1in}
\centering
\includegraphics[height=.35\linewidth]{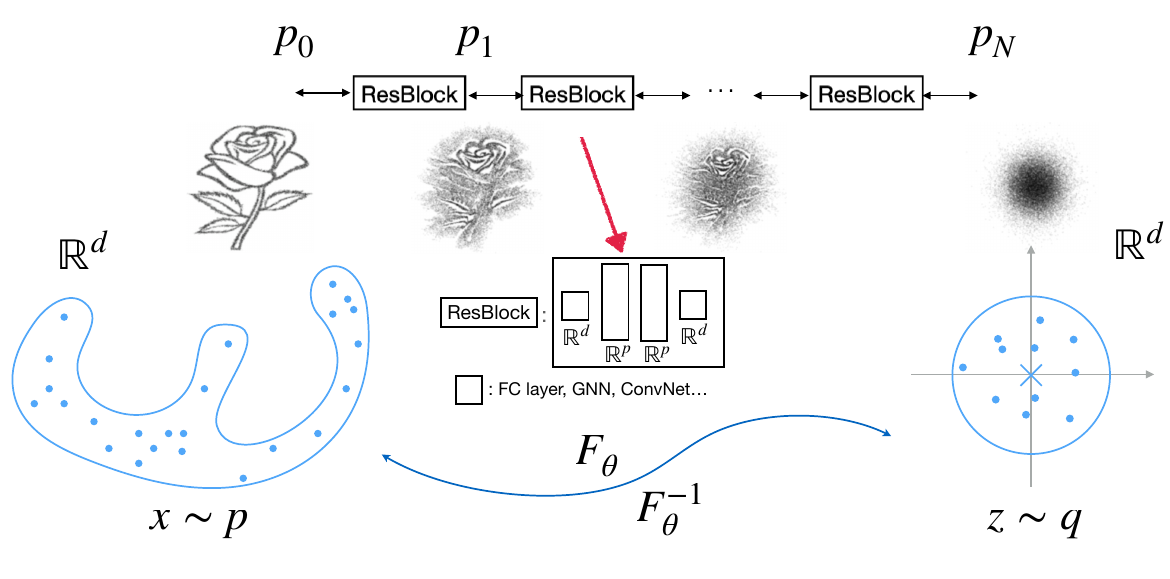}  
\vspace{-0.1in}
\caption{General setup of a flow-based generative model, where the forward process is captured through a forward mapping \( F_\theta \), and the reverse process is captured by the inverse mapping \( F_\theta^{-1} \). Arrows indicate the forward-time flow from the data distribution \( p \) to the target distribution \( q \) (typically Gaussian noise). The forward process maps \( p \) to the noise distribution \( q \), while the reverse process reconstructs \( p \) from \( q \). Both processes involve a sequence of transported densities at discrete time steps, with ResNet blocks serving as iterative steps that push densities in probability space under the Wasserstein-2 metric. 
}
\label{fig:pn}
\end{figure}

\begin{figure}[t]
\centering
\includegraphics[width=.8\linewidth]{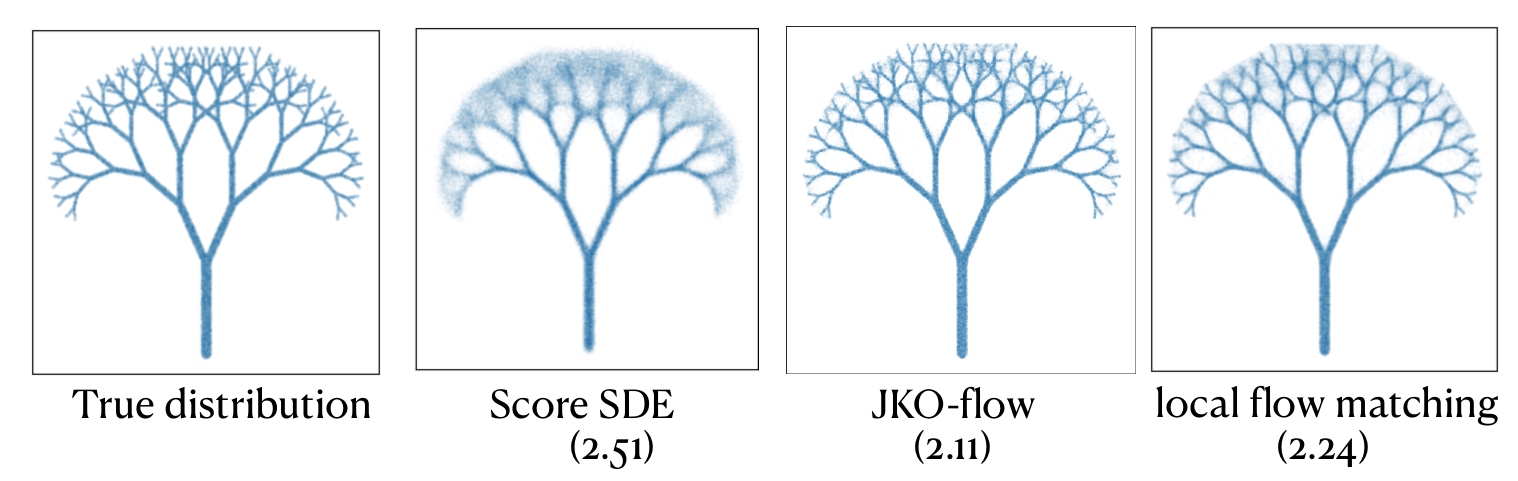}  
\vspace{-10pt}
\caption{Illustrative example of generating a two-dimensional ``fractal tree" distribution using various flow-based generative models.
The numbers in brackets represent the negative log-likelihood (NLL), where lower values indicate better performance. Note that the flow-based generative model captures finer details of the distribution more effectively, while JKO-flow \cite{xu2022jko} achieves a lower (better) NLL score, at the cost of higher computational cost than local Flow Matching \cite{xu2024local}.
}
\label{local-flow}
\vspace{-0.2in}
\end{figure}

\vspace{-0.1in}
\section{Mathematical background}

In this section, we introduce the essential mathematical background for understanding flow-based generative models. 
We start with the concept of an ODE and the {\it velocity field}, which describe the continuous transformations applied to data samples (particles) as they progress through the model.
 Following this, we examine how data density evolves using the {\it continuity equation}, which characterizes changes in probability density over time within the transformation. 
 We then introduce the SDE and the associated  Fokker-Planck equation, 
 and explain its relationship to the continuity equation. Finally, we provide the preliminaries of Wasserstein space and the Optimal Transport map.
 

\vspace{-0.1in}
\subsection{ODE and Continuity equation}

\rev{
An ordinary differential equations (ODE) describes the evolution of a system over time through deterministic rules, typically written as 
\begin{equation}\label{eq:flownet}
\dot{x}(t)= v(x(t), t), \quad t \in [0,T],
\end{equation}
where the data point (particle) $x(t)\in \mathbb R^d$, and $\dot x(t)= d x(t) / dt$.
The given mapping $v(x,t):\mathbb R^d \rightarrow \mathbb R^d$ is call the {\it velocity field}, 
which describes how the a particle evolves, and the velocity field can vary over the position $x$ and time $t$. 
ODEs are foundational tools in modeling dynamical systems and appear widely across applied mathematics and physics, and more background can be found in textbooks such as \cite{Sideris2013OrdinaryDE}.}
For the velocity field,  we also write $v(\cdot, t) = v_t(\cdot )$.

As the participle evolves according to the dynamic, its underlying distribution also evolves over time.
For the continuous-time  ODE dynamic \eqref{eq:flownet}, 
let $P$ be the data distribution with density $p$, $x(0) \sim p$,
and denote by $\rho_t (x) = \rho (x,t)$ the probability density of $x(t)$.
The evolution of $\rho_t$ is governed by the {\it continuity equation} (CE) as 
\begin{equation}\label{eq:liouville}
\partial_t \rho_t + \nabla \cdot (\rho_t  v_t) = 0,
\end{equation}
from $\rho_0 = p$. 
Here the divergence operator follows the standard definition in vector calculus: for a vector field $u(x)=[ u_1(x), \ldots, u_d(x)]$ for $x\in\mathbb R^d$, $\nabla\cdot u(x) =\sum_{j=1}^d \partial u_j(x)/\partial x_j$.
\rev{
The CE, which expresses the conservation of mass (or probability) over time, is a fundamental principle in physics and fluid dynamics.
In statistical physics, this formalism underlies the macroscopic description of particle systems, where probability flows like a conserved fluid.
In Section \ref{subsec:sde-fpe-review},
we will compare CE of an ODE to the Fokker-Planck equation (FPE) of an SDE, 
as both equations describe density evolution.
This connection will also bridge flow and diffusion based generative models.}

Mathematically, the solution trajectory of ODE is well-defined: given the initial value problem, ODE is well-posed under certain regularity conditions of the velocity field $v$, 
meaning that the ODE solution exists, is unique, and continuously depends on the initial value. Informally, the CE \eqref{eq:liouville} provides insights of how data distribution changes from a simple initial state into a more complex target distribution: 
If the algorithm can find a  $v_t$ such that $\rho_T$ at some time $T$ is close to $q$, then one would expect the reverse-time flow from $t=T$ to $t=0$ to transport from $q$ to a distribution close to $p$. Normalizing flow models drive data distribution towards a target distribution $q$ typically normal, $q = \calN(0, I_d)$, per the name ``normalizing.'' 

\vspace{-0.1in}
\subsection{SDE and Fokker-Plank equation}\label{subsec:sde-fpe-review}

\rev{Another popular type of generative model, namely the Diffusion Models, is based on stochastic differential equation (SDE), 
for example, the Ornstein–Uhlenbeck (OU) process.
The OU process  is a classic example of a SDE modeling noisy dynamics with mean-reverting behavior. 
The OU process in  in $\R^d$ takes the form 
\begin{equation}\label{eq:OU-SDE}
    dX_t = - X_t dt + \sqrt{2} dW_t,
\end{equation}
where $dW_t$ is standard Brownian motion.
Introductory treatments of SDE can be found in \cite{evans2012introduction,oksendal2013stochastic}.
The SDE used in Diffusion models, see e.g. \cite{song2021scorebased}, is based on the OU process \eqref{eq:OU-SDE} under a change of time reparametrization.}
More generally, one can consider a diffusion process 
 \begin{equation}\label{eq:diffusion-sde-2}
 dX_{t}= - \nabla V(X_{t} )\,dt+ \sqrt{2}\,dW_{t}, \quad X_0 \sim P,
 \end{equation}
where $\nabla V$ denotes the gradient of a scalar function $V: \mathbb R^d \rightarrow \mathbb R$.
The OU process \eqref{eq:OU-SDE} is a special case with $V(x) = \| x\|^2/2$.

We denote by $\rho_t$  the marginal distribution of $X_t$ for $ t >0$,
and the time evolution of $\rho_t$ is described by the Fokker-Planck equation (FPE).
\rev{
The FPE describes the time evolution of probability densities under stochastic dynamics, such as Brownian motion and particle diffusion, and has deep roots in statistical physics. From this perspective, generative modeling via flow-based or diffusion models can be seen as learning or simulating physical processes that transport probability mass over time. The learned dynamics can approximate thermodynamically motivated flows, such as those that minimize free energy or entropy production. This connection to physics provides both a theoretical foundation and intuitive guidance for designing and interpreting generative trajectories, and in this tutorial we elaborate on the optimization perspective.}

For the process \eqref{eq:diffusion-sde-2}, the FPE is  written as
\begin{equation}\label{eq:FPE}
\partial_{t}\rho_t = \nabla\cdot(\rho_t \nabla V + \nabla \rho_t).
\end{equation}
Note that the density evolution through the CE \eqref{eq:liouville} and the FPE \eqref{eq:FPE} are mathematically equivalent when we set
\begin{equation}
v(x, t) = -\nabla V(x) - \nabla \log \rho(x,t).
\label{velocity_score}
\end{equation}
However, these two approaches lead to very different trajectories (as illustrated in Fig. \ref{fig:SDE_ODE}) and algorithms: the flow-based model learns the velocity field $v$, which is implicitly related to the score function $\nabla \log \rho$ that the diffusion generative model learns in the forward process and uses in the reverse-time generative process, illustrated in Fig. \ref{fig:pn}.
\rev{While closely related, flow- and diffusion-based models each offer unique strengths:
the deterministic ODE trajectory in flow model generally facilitates faster model evaluation and generation,
while Diffusion Models, which adopts SDE sampling in theory, enjoys better known theoretical convergence guarantees thanks to the SDE theories. 
It has become increasingly common to distill deterministic flow-based generators from trained Diffusion Models,
most notably with Consistency Models \cite{song2023consistency}, 
and 
Progressive Distillation \cite{salimans2022progressive},
among many recent developments.
}

 \begin{figure}[t]
\vspace{-0.1in}
\centering
\includegraphics[height=0.3\linewidth]{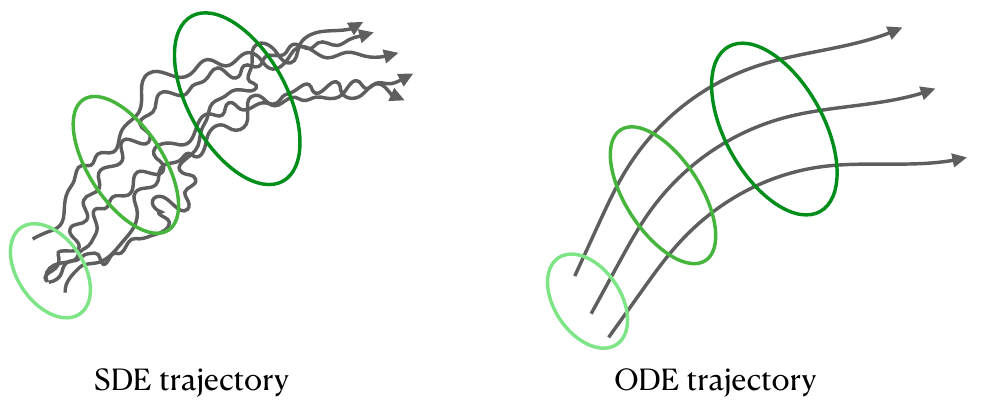}
 \vspace{-10pt}
 \caption{Trajectory of SDE versus ODE: The SDE trajectory corresponds to a diffusion model, while for ODE, the dynamics are deterministic, but the initial position of the trajectory follows a distribution.}
 \label{fig:SDE_ODE}
 \vspace{-0.1in}
 \end{figure}

\vspace{-0.1in}
 \subsection{Wasserstein space and Optimal Transport}
 
We also review the definitions of the Wasserstein-2 distance and optimal transport (OT) map, which are connected by the Brenier Theorem (see, e.g., \cite[Section 6.2.3]{ambrosio2005gradient}).
Denote by $\calP_2$ the space of probability distributions on $\R^2$ with finite second moments,
namely $\calP_2 = \{ P \text{ on $\R^d$}, \, s.t., \int_{\R^d} \| x \|^2 dP(x) < \infty \}$, and denote by $\calP_2^r$ the distributions in $\calP_2$ that have densities.
Given two distributions $\mu, \nu \in \calP_2$, the Wasserstein-2 distance $\W( \mu, \nu)$ is defined as
\begin{equation}\label{eq:ot}
    \W^2( \mu, \nu ) := \inf_{\pi \in \Pi ( \mu, \nu)}
        \int_{ \R^d \times \R^d} \|x-y\|^2 d\pi(x,y),
\end{equation}
where $\Pi (\mu, \nu)$ denotes the family of all joint distributions with $\mu$ and $\nu$ as marginal distributions.
When $P$ and $Q$ are in $\calP_2^r$ and  have densities $p$ and $q$ respectively, we also denote $\W(P,Q)$ as $\W(p,q)$. When at least $\mu$ has density, we have the following result by the Brenier Theorem, which allows us to define the optimal transport (OT) map: 
The unique minimizer of \eqref{eq:ot} is achieved at $\pi = ( {\rm I}_d, T_\mu^\nu)_\#\mu$, where ${\rm I}_d$ denotes the identity map, $T_\mu^\nu$ is the OT map  from $\mu$ to $\nu$ which is $\mu$-a.e. defined.
Here, the {\it pushforward} of a distribution $P$, by a map $F:\R^d \to \R^d$, is denoted as $F_\# P$, 
such that 
$F_\# P(A) = P(F^{-1}(A))$ for any measurable set $A$ in $\R^d$.
The minimum of \eqref{eq:ot}  also equals that of the Monge problem, namely
\begin{equation}\label{eq:ot-monge}
\W^2( \mu, \nu) = \inf_{F: \R^d \to \R^d, \, F_\# \mu = \nu }
    \int \| x -F(x) \|^2 d\mu(x).
\end{equation}








\vspace{-0.1in}
\section{Algorithm basics of generative flow models}\label{sec:algo-basics}

Normalizing Flow (NF) is a class of deep generative models for efficient sampling and density estimation. 
Compared to diffusion models, NF models \cite{nflow_review} appear earlier in the generative model literature. Fig. \ref{fig:pn} illustrates the general setup of flow-based generative models.
%
%
To be more specific, NFs are transformations that map an easy-to-sample initial distribution, such as a Gaussian, to a more complex target distribution.
Generally, an NF model provides an invertible {\it flow mapping} $F_\theta: \R^d \to \R^d$, parametrized by $\theta$ (usually a neural network), such that it maps from the ``code'' or ``noise'' $z$ (typically Gaussian distributed) to the data sample $x \sim p$, where $p$ is the unknown underlying data distributions. 
We are only given the samples from the data distribution $p$ as training data.
Once a flow model $F_\theta$ is trained, one can generate samples $x$ by computing $x= F_\theta(z)$ and drawing $z \sim q$, where $q$ is a distribution convenient to sampling in high dimensional, typically $q = \calN(0,I)$.

\vspace{-0.1in}
\subsection{Training objective and particle-based implementation}

NFs aim to learn the transform from a simple distribution to a complex data distribution by maximizing the log-likelihood of the observed data. 
In terms of algorithm, the training aims to find the flow map $F_\theta$ that minimizes the training objective.
%
%
Given a dataset \( \{x^i\}_{i=1}^m \) (referred to as ``particles''), we assume that each \( x^i \) is sampled from an unknown distribution \( p \), and want to approximate it with our flow model \( p_\theta(x) \). The model transforms a simple target distribution \( q  \), e.g., $\calN(0,I)$, into \( p_\theta \) using the invertible mapping \( F_\theta \). 
The log-likelihood of a data point \( x \) under the flow model $F_\theta$ is then
\begin{equation}\label{eq:log-ptheta-NF}
\log p_\theta(x) = \log p_z(F_\theta^{-1}(x)) + \log \left| \det J_{F_\theta^{-1}}(x) \right|,
\end{equation}
where \( p_z(z) \) is the density of the noise (Gaussian distribution), \(F_\theta^{-1} \) maps the data \( x \) back to the noise space, 
and \( \det J_{F_\theta^{-1}}(x) \) is the Jacobian determinant of the inverse transformation, capturing how the transformation scales probability mass.
To train the model, maximizing the total log-likelihood over all training samples leads to minimizing the negative log-likelihood (NLL):
\begin{equation}
\mathcal{L}(\theta) = -\frac{1}{m} \sum_{i=1}^{m} \left[ \log p_z(F_\theta^{-1}(x^i)) + \log \left| \det J_{F_\theta^{-1}}(x^i) \right| \right], \label{NLL_def}
\end{equation}
which serves as a training objective. 

Since NFs are differentiable, $\mathcal{L}(\theta)$ can be optimized using stochastic gradient descent (SGD) or variants such as Adam. 
The gradient of the loss function can be computed using backpropagation through the invertible transformations. However, depending on the way of parametrizing $F_\theta$, the computation of likelihood and backpropagation training can be expensive and challenging to scale to high dimensional data. 
The key to design a flow model is to construct $F_\theta$ for efficient training and generation, which we detail in below.


 \begin{figure}[t]
\vspace{-0.1in}
\centering
\includegraphics[height=0.3\linewidth]{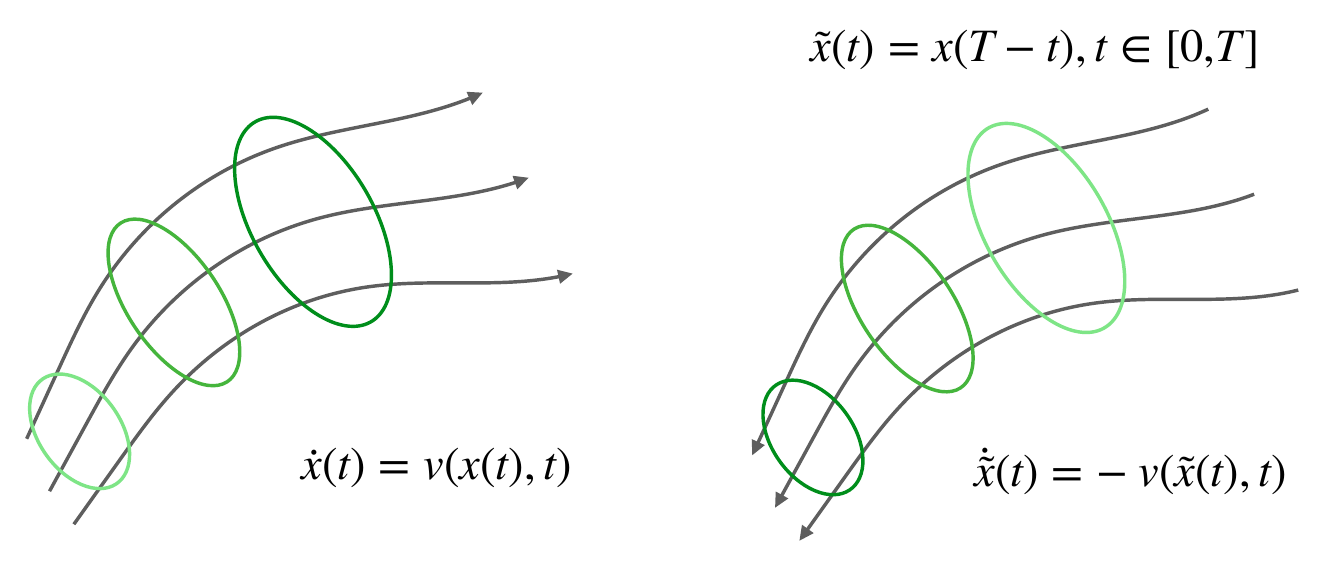}
 \caption{Reverse time dynamic of ODE: ODE can go forward or backward in time deterministically.
 }
\label{fig:reverse_time_ODE}
\vspace{-0.2in}
 \end{figure}

\vspace{-0.1in}
\subsection{Discrete-time Normalizing Flow}
Largely speaking, NFs fall into two categories: discrete-time and continuous-time.
The discrete-time NF models typically follow the structure of a Residual Network (ResNet) \cite{he2016deep} and consist of a sequence of mappings:
\begin{equation}\label{eq:resnet}
x_{n} = x_{n-1} +  f_n (x_{n-1}), \quad n = 1, \ldots, N,
\end{equation}
where each $f_n$ is a neural network mapping parameterized by the $n$-th ``Residual Block'', and $x_n$ is the output of the  $n$-th block. 
The composition of the $N$ mappings $x_{n-1} \mapsto x_n$, $n=1,\ldots, N$, together provides the (inverse of the) flow mapping $F_\theta$, 
the invertibility of which needs to be ensured by additional techniques 
and usually by enforcing each block mapping $x_{n-1} \mapsto x_n$ to be invertible. 
The computation of the inverse mapping, however, may not be direct. 
After training, the generation of $x$ is by sampling $z\sim q$ and computing $z \mapsto x$ is via the flow map computed through $N$ blocks.

Meanwhile, for the discrete-time flow \eqref{eq:resnet}, the computation of the likelihood \eqref{eq:log-ptheta-NF}
calls for the computation of the log-determinant of the Jacobian of $ f_n$.
This can be challenging for a general multivariate mapping $f_n$.
To facilitate these computations, earlier NFs such as  
NICE \cite{dinh2014nice}, Real NVP \cite{dinh2016density}, and Glow \cite{kingma2018glow}
adopted special designs of the neural network layer type in $f_n$ in each block. 
While the special designs improve the computational efficiency of invertibility and log determinant of Jacobian, it usually restrict the expressiveness of the Residual block and consequently the accuracy of the NF model.

\vspace{-0.1in}
\subsection{Continuous-time Normalizing Flow based on neural ODE}


Continuous-time NFs \cite{grathwohl2018ffjord} are implemented under the neural ODE framework \cite{chen2018neural},
where the neural network features $x(t)$ is computed by integrating an ODE as introduced in \eqref{eq:flownet},
and $v(x,t)$ is parametrized by a neural ODE network. 
We use $v_\theta$ to denote the parametrization.
 A notable advantage of the continuous-time NF is that the neural ODE framework allows to compute the forward/inverse flow mapping as well as the likelihood by numerical integration. 

\begin{itemize}
\item Forward/inverse flow mapping:

In the continuous-time flow, invertibility is presumed since a well-posed (neural) ODE can be integrated in two directions of time alike, as illustrated in Fig. \ref{fig:reverse_time_ODE}. 
Specifically, let $x(t)$ be the ODE solution trajectory satisfying $\dot x(t) = v_\theta( x(t), t)$ on $[0,T]$, the flow mapping  can be written as
\begin{equation}
  F_\theta(x) = x +  \int_0^T v_\theta(x(t),t) dt, \quad x(0) = x,
  \label{eq:map_def}
  \end{equation}
The inverse mapping $F_\theta^{-1}$ can be computed by integrating reverse in time. 
    In actual implementation, these integrals are calculated on a discrete time grid on $[0,T]$ using standard numerical integration schemes \cite{chen2018neural}.

\item Likelihood by instantaneous change-of-variable:

For flow trajectory $x(t)$ satisfying \eqref{eq:flownet}, suppose $x(0)$ has a distribution $x(0) \sim p = p_0$ and this induces the marginal density $p_t$ of $x(t)$.
The so-called instantaneous change-of-variable formula  \cite{grathwohl2018ffjord}  gives the relation 
 \begin{equation}
 \log p_t( x(t)) - \log p_s( x(s))  =  - \int_{s}^{t} \nabla \cdot v_\theta( x(\tau), \tau) d\tau,
 \label{eq:Instantaneous change-of-variable}
 \end{equation}
 which involves the time-integration of the trace of the Jacobian of $ v_\theta$. In practice, the divergence term averaged over samples is often estimated by  the Hutchinson approximation.
 While these computations may still encounter challenges in high dimensions, the ability to evaluate the (log) likelihood is fundamentally useful; in particular, it allows for evaluating the maximum likelihood training objective on finite samples. This property is also adopted in the deterministic reverse process in diffusion models \cite{song2021score}, called the ``probability flow ODE'', so the likelihood can be evaluated once a forward diffusion model has been trained.
 
\end{itemize}

Because the invertibility and likelihood computation is guaranteed by the continuous-time formulation, there is no need to design special architecture in the neural network parametrization of $v_\theta$, which makes the continuous-time NF ``free-form'' \cite{grathwohl2018ffjord}.
This allows to leverage the full expressive power of deep neural network layers 
as well as problem-specific layer types in each Residual block $f_n$ depending on the application at hand. 
For example, in additional to the basic feed-forward neural networks,
one can use convolutional neural networks (CNN) if the data are images, and graph neural networks (GNN) to generate data on graphs.

\vspace{-0.1in}
\subsection{Discrete-time flow as iterative steps}\label{subsec:discrete-flow-as-steps}

We first would like to point out that the distinction between discrete-time versus continuous-time flow models is not strict since continuous-time flow needs to be computed on a discrete time grid in practice
-- recalling that the ResNet block \eqref{eq:resnet} itself can be viewed as a Forward Euler scheme to integrate an ODE. 
In particular, one can utilize the benefit of continuous-time NF (neural ODE) inside the discrete-time NF framework by setting the $n$-th block $f_n$ to be a neural ODE on a subinterval of time. 

Specifically, let the time horizon $[0,T]$ be discretized into $N$ subintervals $[t_{n-1}, t_n]$
and $x(t)$ solves the ODE with respect to the velocity field $v_\theta(x,t)$.
 The $n$-th block mapping (associated with the subinterval $[ t_{n-1}, t_n]$)
  is defined as 
  \begin{equation}
  x_n  = x_{n-1}+\int_{t_{n-1}}^{t_n} v_\theta(x(t),t) dt, \quad x(t_{n-1}) = x_{n-1}.\label{transport_map_n} 
  \end{equation}
  This allows the computation of the likelihood via integrating $\nabla v_\theta$ by \eqref{eq:Instantaneous change-of-variable} (and concatenating the $N$ subintervals), 
 and the inverse mapping $x_n \mapsto x_{n-1}$ of each block again can be computed by integrating the ODE reverse in time. 
 
In short, by adopting a continuous-time NF sub-network inside each Residual block, one can design a discrete-time flow model that is free-form, 
automatically invertible (by using small enough time step to ensure sufficiently accurate numerical integration of the ODE such that the ODE trajectories are distinct), and enjoys the same computational and expressive advantage as continuous-time NF. 
One subtlety, however, lies in the parametrization of $v_\theta$: in the standard continuous-time NF, $v_\theta(x,t)$ is ``one-piece'' from time $0$ to $T$, 
while when putting on a discrete time grid with $N$ time stamps, $v_\theta( x,t)$ on the subinterval $[t_{n-1}, t_{n}]$ provides the parametrization of $f_n$ in  \eqref{eq:resnet} and can be parametrized independently from the other blocks (as is usually done in ResNet). 

If using independent parametrization of $v_\theta$ on $[t_n, t_{n-1}]$, the $n$-th block can potentially be trained independently and {\it progressively}
-- meaning that only one block is trained at a time and 
the \(n\)-th block is trained only after the previous \( (n-1) \) blocks are fully trained and fixed (see Fig. \ref{Fig:progressive}) 
-- but the training objective needs to be modified from the end-to-end likelihood \eqref{eq:log-ptheta-NF}. 
Such training of flow models in a progressive manner has been implemented under various context in literature,
particularly in \cite{alvarez2022optimizing,mokrov2021large,fan2022variational,xu2022jko,vidal2023taming} motivated by the Jordan-Kinderleherer-Otto (JKO)  scheme, which we will detail more in Section \ref{sec:flow-iterative}. 
The progressive training intuitively enables incremental evolution of probability distributions over time.
Experimentally, it has been shown to improve the efficiency of flow-based generative models (by reducing computational and memory load in training each block) while maintaining high-quality sample generation. From a theoretical point of view, the discrete-time Residual blocks in such flow models can naturally be interpreted as ``steps'' in certain iterative Gradient Descent scheme that minimizes a variational objective over the space of probability densities.

\vspace{-0.1in}
\subsection{Simulation-free training: Flow Matching}
\label{subsec:review-FM}

While continuous-time NFs enjoy certain advantages thanks to the neural ODE formulation,
a computational bottleneck for high dimensional data is the computation of $\nabla \cdot v_\theta$ in \eqref{eq:Instantaneous change-of-variable}.
The backpropagation training still needs to track the gradient field along the numerical solution trajectories of the neural ODE, which makes the approach ``simulation-dependent'' and computationally costly.
In contrast, the recent trend in deep generative models focuses on ``simulation-free'' approaches, where the training objective is typically an $L^2$ loss (mean squared error) that ``matches'' the neural network velocity field $v_\theta(x,t)$ to certain target ones. 
Such simulation-free training has been achieved by Diffusion Models \cite{ho2020denoising,song2021scorebased} as well as Flow Matching (FM) models \cite{lipman2023flow,albergo2023building,liu2022rectified}.
FM ensures that the learned velocity field satisfies CE, is computationally efficient, and potentially enables efficient sample generation with fewer steps.
FM models have demonstrated state-of-the-art performance across various applications, such as text-to-image generation
and audio generation. 
\rev{A recent review of FM formulated via probability paths can be found in \cite{lipman2024flow}.}

Here we provide a brief review of the latter, primarily following the formulation in \cite{albergo2023building}.
The FM model still adopts continuous-time neural ODE and the time interval is  $[0,1]$. 
FM utilizes a pre-specified ``interpolation function'' $I_t$, parametrized by $t \in [0,1]$, which smoothly connects samples for two endpoints $x_0$ and $x_1$ defined as
\begin{equation}\label{eq:interpolation}
    \phi(t) : =I_t(x_0, x_1), \quad t \in [0,1], 
\end{equation}
where $x_0 \sim p$, $x_1 \sim q$. Common interpolation function is a straight line from $x_0$ to $x_1$.
The model is trained to match the velocity field \( v_\theta(x,t) \), denoted as $\hat v$ here, 
to the true probability flow induced by \( I_t \) via minimizing the (population) loss
\begin{equation}\label{eq:fm_loss}
L( \hat v) := \int_0^1 \mathbb{E}_{ x_0, x_1} \left\| 
  \hat v ( \phi(t),t)-\frac{d}{dt} \phi(t)\right\|^2 dt.
\end{equation}
Here we suppress the parameter $\theta$ in notation as we assume sufficient expressiveness of the flow network 
and for simplicity consider the unconstrained minimization of the field $\hat v(x,t)$.

 

While not immediate from its appearance, the minimization of \eqref{eq:fm_loss} gives a desired velocity field that leads us to the correct target. Formally, we call a velocity field $v(x,t)$ on $\R^d \times [0,1]$ ``valid'' if it  provides a desired transport  from $p$ to $q$, i.e., the continuity equation (CE) $\partial_t \rho + \nabla \cdot (\rho v) = 0$ starting from $\rho(\cdot,0) = p$ satisfies $\rho(\cdot,1) = q$.
%
%
It can be shown that there is a valid $v$ (depending on the choice of $I_t$) such that, up to a constant, $L( \hat v )$ is equivalent to the $L^2$ loss 
$ \int_0^1 \int_{\R^d} \| \hat v(x,t) - v(x,t) \|^2 \rho_t(x) dx dt$ where $\rho_t$ solves the CE induced by $v$. 
This has been derived in  \cite{albergo2023building}; see also \cite{xu2024local}.


\begin{lemma}[Consistency of FM loss]\label{lemma:FM_loss}
Given $p$, $q$ and the interpolation function $I_t$, 
there exists a valid velocity field $v$ such that,
with $\rho_t(x)=\rho(x,t)$ being the solution of the induced CE by $v$, the loss $L( \hat v )$ can be written as
\begin{equation}\label{eq:loss-FM-2}
L(\hat v) = c + \int_0^1 \int_{\R^d} \| \hat v(x,t) - v(x,t) \|^2 \rho_t(x) dx dt, 
\end{equation}    
where $c$ is a constant independent from $\hat v$. 
\end{lemma}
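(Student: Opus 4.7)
The plan is to exploit a conditioning argument. The FM loss is an $L^2$ regression of the candidate field $\hat v(x,t)$ against the random velocity $\dot\phi(t)$ evaluated at the random location $\phi(t)$; the unconstrained minimizer over function space is the conditional expectation of $\dot\phi(t)$ given $\phi(t)=x$. I will therefore \emph{define} the target field as this conditional expectation and then verify that it is valid.

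Concretely, let $\rho_t$ denote the law of $\phi(t)$ when $(x_0,x_1)$ is drawn from the (possibly dependent) joint coupling with marginals $p$ and $q$, and set
\begin{equation*}
v(x,t) := \E\!\left[\dot\phi(t) \,\big|\, \phi(t)=x\right],
\end{equation*}
picking any measurable version on $\mathrm{supp}(\rho_t)$ and extending arbitrarily off-support. Because standard interpolations satisfy $I_0(x_0,x_1)=x_0$ and $I_1(x_0,x_1)=x_1$, we automatically have $\rho_0=p$ and $\rho_1=q$. To show $v$ is valid, I would verify $(\rho_t,v)$ solves the CE weakly by a ``superposition'' argument: for any $\varphi\in C_c^\infty(\R^d)$, differentiate $\E[\varphi(\phi(t))]$ in $t$, apply the chain rule to get $\E[\nabla\varphi(\phi(t))\cdot \dot\phi(t)]$, and then use the tower property to replace $\dot\phi(t)$ by $v(\phi(t),t)$, yielding $\int \nabla\varphi(x)\cdot v(x,t)\,\rho_t(x)\,dx$. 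This is the weak form of $\partial_t\rho_t+\nabla\cdot(\rho_t v)=0$.

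Next I would expand the square inside the loss. Pointwise in $t$,
\begin{equation*}
\|\hat v(\phi(t),t)-\dot\phi(t)\|^2
= \|\hat v(\phi(t),t)-v(\phi(t),t)\|^2
+ 2\langle \hat v(\phi(t),t)-v(\phi(t),t),\, v(\phi(t),t)-\dot\phi(t)\rangle
+ \|v(\phi(t),t)-\dot\phi(t)\|^2.
\end{equation*}
Conditioning on $\phi(t)$, the first term depends only on $\phi(t)$, and the cross term vanishes because $\E[\dot\phi(t)-v(\phi(t),t)\mid \phi(t)]=0$ by the definition of $v$. Taking expectation, the first term becomes $\int_{\R^d}\|\hat v(x,t)-v(x,t)\|^2\rho_t(x)\,dx$ and the last term is a scalar $c_t$ independent of $\hat v$. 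Integrating over $t\in[0,1]$ gives \eqref{eq:loss-FM-2} with $c:=\int_0^1 c_t\,dt$.

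The main obstacle is essentially regularity bookkeeping rather than any deep step: I need $\dot\phi(t)$ to be integrable (which follows from $p,q\in\calP_2$ and smooth enough $I_t$), $v(x,t)$ to be a well-defined measurable function $\rho_t$-a.e., and the differentiation under the expectation in the CE check to be justified. For the standard linear interpolation $I_t(x_0,x_1)=(1-t)x_0+tx_1$ these conditions are immediate since $\dot\phi(t)=x_1-x_0$ has finite second moment, so the argument reduces to the routine tower-property algebra above.
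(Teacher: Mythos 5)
Your argument is correct and is essentially the paper's proof in conditional-expectation language: the paper's $v_t(x)=j_t(x)/\rho_t(x)$, with $j_t(x)=\int\!\int \partial_t I_t(x_0,x_1)\,\delta(x-I_t(x_0,x_1))\,\rho_{0,1}(x_0,x_1)\,dx_0\,dx_1$, is exactly your $\E[\dot\phi(t)\mid\phi(t)=x]$, and the paper likewise verifies CE from this construction and then completes the square in $\hat v$. You phrase the final step as an $L^2$-projection/orthogonality argument and the CE check in weak form against test functions, which is a cleaner way of presenting the same computation the paper carries out with formal Dirac deltas and explicit integral manipulation.
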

A direct result of the lemma is that the (unconstrained) minimizer of the MF loss \eqref{eq:fm_loss} is $\hat v = v$ and it is a valid velocity field.
In practice, the training of $\min L(\hat v)$ from finite samples is via the empirical version of  \eqref{eq:fm_loss}.

\vspace{-0.1in}
 \section{Flow as iterative algorithm in probability space}\label{sec:flow-iterative}


In this section, we elaborate on flow models that implement iterative steps to minimize a variational objective in the Wasserstein space. 
We will detail on the flow motivated by the JKO scheme and the theoretical analysis of the generation accuracy.

\subsection{Iterative flow using JKO scheme}\label{subsec:jko-flow-detail}

\begin{figure}[t]
\centering
\includegraphics[height=0.35\linewidth]{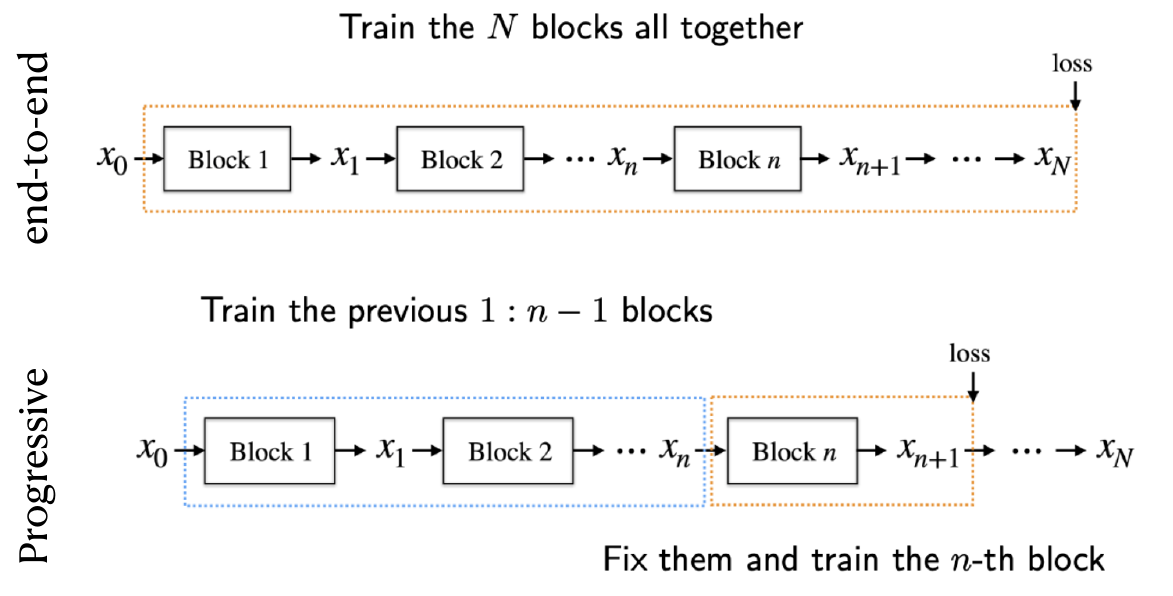}
 \caption{End-to-end versus progressive training of a flow model consisting of $N$ Residual blocks.}
 \label{Fig:progressive}
 \vspace{-0.2in}
 \end{figure}

We consider discrete-time flow models where each block can potentially take form of a continuous-time NF (neural ODE), following the set up in Section \ref{subsec:discrete-flow-as-steps}.
As has been shown in Section \ref{sec:algo-basics}, 
in the (end-to-end) training of a flow model
all the $N$ blocks (or the entire flow on \([0,T]\)) are optimized simultaneously using a single objective, usually the max-likelihood.
In contrast, the progressive training will train each block sequentially and independently, attaching a loss specific to the block during training, see Fig. \ref{Fig:progressive}. 
We call such flow implementing the iterative steps the {\it iterative flow},
and the key is to design a step-wise loss to train each block. 

For iterative flow models motivated by the JKO scheme (here our presentation illustrates the framework from \cite{xu2022jko,cheng2024convergence}), the step-wise loss is the Kullback-Leibler (KL) divergence to the known target distribution $q \propto e^{-V}$, namely
\begin{equation}\label{eq:def-KL-G}
 {\rm KL} (\rho || q)
    =  \int \rho(x) \log \rho(x) dx + \int V(x) \rho(x) dx + \text{const}.
\end{equation}
Specifically, the classical JKO scheme \cite{jordan1998variational} computes a sequence of distributions $\rho_n$, $n=0,1,...$ by 
\begin{align}\label{eq:JKO-obj-1}
    \rho_{n+1} = \text{arg} \min_{\rho\in \calP_2 }  
     {\rm KL} (\rho || q) + \frac{1}{2 \gamma} \W^2(\rho_n, \rho),
\end{align}
starting from $\rho_0\in \calP_2$, where $\gamma > 0$ controls the step size. 
In the context of normalizing flow, 
the sequence starts from $\rho_0 = p$ the data density and the density $\rho_n$ evolves to approach $q$ as $n$ increases. 

Strictly speaking, the minimization in \eqref{eq:JKO-obj-1} is over the Wasserstein-2 space of the density $\rho$, 
which, in the $n$-th JKO flow block will apply to the pushforwarded density by the mapping in the $n$-th block.
In other words, define the forward mapping \eqref{transport_map_n} in the $n$-th block as $F_n$, 
i.e. $x_n = F_n( x_{n-1})$, and in view of \eqref{eq:resnet}, $F_n(x) = x + f_n(x)$; 
The parametrization of $F_n$ is via $v_\theta(x,t)$ on $t \in [t_{n-1}, t_n]$, so we write $F_{n,\theta}$ to emphasize the parametrization. 
We denote by $p_n$ the marginal distribution of $x_n$, where $p_0 = p$ ($x_0$ follows the data distribution), and then we have 
\begin{equation}
p_n = (F_{n, \theta})_\# p_{n-1}.
\end{equation}
Following \eqref{eq:JKO-obj-1}, the training of the $n$-th JKO flow block is by
\begin{align}\label{eq:JKO-obj-2}
     \min_\theta ~ {\rm KL}(  ({F_{n,\theta}})_\# {p_{n-1}} \| q ) 
     + \frac{1}{2 \gamma} \W^2( p_{n-1}, ({F_{n,\theta}})_\# {p_{n-1}} ),
\end{align}
which is equivalent to the following objective \cite{xu2022jko}
 \begin{equation}
 \min_\theta ~ {\rm KL}(  ({F_{n,\theta}})_\# {p_{n-1}} \| q ) + \frac{1}{2\gamma} {\mathbb E}_{x\sim p_{n-1}}  \| x-F_{n,\theta} (x) \|^2. 
 \label{eq:iter_n}
 \end{equation}
 When $q = \calN(0,I)$, we have $V(x) = \|x\|^2/2$.
 Then, by the instantaneous change-of-variable formula \eqref{eq:Instantaneous change-of-variable}, the KL divergence term in \eqref{eq:iter_n} expands to
 \begin{equation}\label{eq:KL-term-expand}
 {\rm KL}(  ({F_{n,\theta}})_\# {p_{n-1}} \| q ) =
 \mathbb E_{x(t_{n-1}) \sim p_{n-1}}  
  	\left(  \frac{x( t_{n})^2}{2}  -  \int_{ t_{n-1} }^{t_{n} }\nabla\cdot v_\theta ( x (\tau),\tau)d\tau  \right) + \text{const.}
 \end{equation}

 \begin{figure}[b]
\vspace{-0.2in}
\centering
\includegraphics[height=0.35\linewidth]{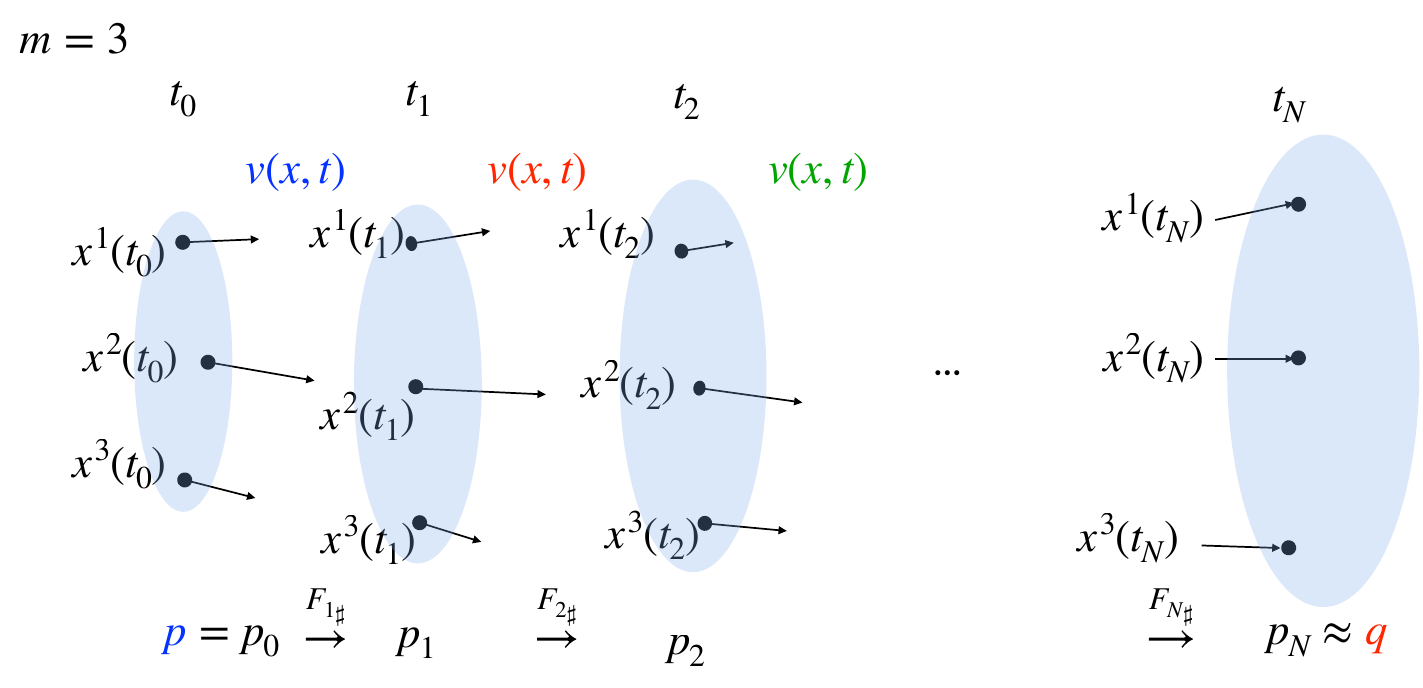}
 \caption{Example of JKO flow scheme based on particle implementation, see more in Section \ref{subsec:jko-flow-detail}. 
 }
 \label{Fig:JKO_eg}
 \vspace{-0.1in}
 \end{figure}

\paragraph{Example: JKO flow as pushforwarding particles}

The minimization of (the empirical version of) \eqref{eq:iter_n}\eqref{eq:KL-term-expand} in each step of the JKO flow is ready to be computed on particles $\{ x^i \}_{i=1}^m$, namely the finite data samples. To illustrate such an iteration, consider an example with \( m = 3 \) particles, as shown in Fig. \ref{Fig:JKO_eg}. Initially, at \( t_0 = 0 \), the particle positions correspond to training samples $x^i( 0) = x^i$.  In the first iteration, we train the first residual block, and the velocity field \( v_\theta(x, t) \) over the interval \( t \in [t_0, t_1] \) is modeled by a neural network with parameters \( \theta \). 
The empirical version of  \eqref{eq:iter_n}\eqref{eq:KL-term-expand} gives the training objective of the first block as 
 \begin{equation}\label{eq:objective-jko-example-n=1}
 \min_\theta~ \frac{1}{m} \sum_{i=1}^m \left( \frac{x^i( t_1)^2}{2}  -  \int_{ t_0 }^{t_{1} }\nabla\cdot v_\theta ( x^i (\tau; \theta),\tau) d\tau  \right) 
 	+  \frac {1}{2\gamma m}\sum_{i=1}^m \| x^i(t_{1}) - x^i(t_0) \|^2,
 \end{equation}
 where \( \gamma > 0 \) controls the step size.
 After the first block is trained, the particle positions are updated using the learned transport map \eqref{transport_map_n} on $[t_0, t_1]$, namely,
 \[
 x^i(t_1) = x^i(t_0) + \int_{t_0}^{t_1} v_\theta( x^i(t),t)dt, \quad \dot x^i (t) = v_\theta( x^i(t), t ), \quad x^i(t_0) = x^i.
 \]
 In the next iteration, we train the velocity field \( v_\theta(x, t) \) over the time interval \( [t_1, t_{2}] \), and the initial positions of the particles are $x^i(t_1)$ which have been computed from the previous iteration. This procedure continues for $n=1, 2, \cdots, N$ for $N$ steps (Residual blocks).

\vspace{-0.1in}
\subsection{Interpretation of Wasserstein regularization term}

Before imposing the flow network parametrization, the original JKO scheme \eqref{eq:JKO-obj-1} can be interpreted as the  $\W$-proximal Gradient Descent (GD) of the KL objective  \cite{salim2020wasserstein,cheng2024convergence} with step size controlled by $\gamma$. 
This naturally provides a variational interpretation of the iterative flow model as implementing a discrete-time GD on the $\W$ space.
The connection to Wasserstein GD allows us to prove the generation guarantee of such flow models by analyzing the convergence of proximal GD in Wasserstein space, to be detailed in Section \ref{subsec:theory-convergence}.

Meanwhile, the per-step training objective \eqref{eq:JKO-obj-2} can be viewed as the addition of the variational objective
(closeness of the pushforwareded density to target $q$)
and the Wasserstein term (the squared $\W$ distance between the pushforwareded density and the current density).
The  Wasserstein term serves to {\it regularize} the ``amount of movement'' from the current density $p_{n-1}$ by the transport map $F_{n,\theta}$. 
Intuitively, among all transports $F_{n,\theta}$ that can successfully reduces the KL divergence from $(F_{n, \theta})_\# p_{n-1}$ to $q$, the regularization term will select the one that has the smallest movement.
By  limiting excessive movement from one iteration to the next, 
the $\W$ regularization term leads to straighter transport paths in probability space, as illustrated in Fig. \ref{Fig:regularization}.
It may also reduce the number of neural network blocks needed to reach the target distribution. 
In a particle-based implementation of the $n$-th flow block, this $\W$ term can be computed as 
$\frac 1m\sum_{i=1}^m \|{x^i(t_{n})}-{x^i(t_{n-1})}\|^2$, the average squared movement over $m$ particles, making use of the ODE trajectories $x^i(t)$.



\begin{figure}[t]
\centering
\includegraphics[height=0.3\linewidth]{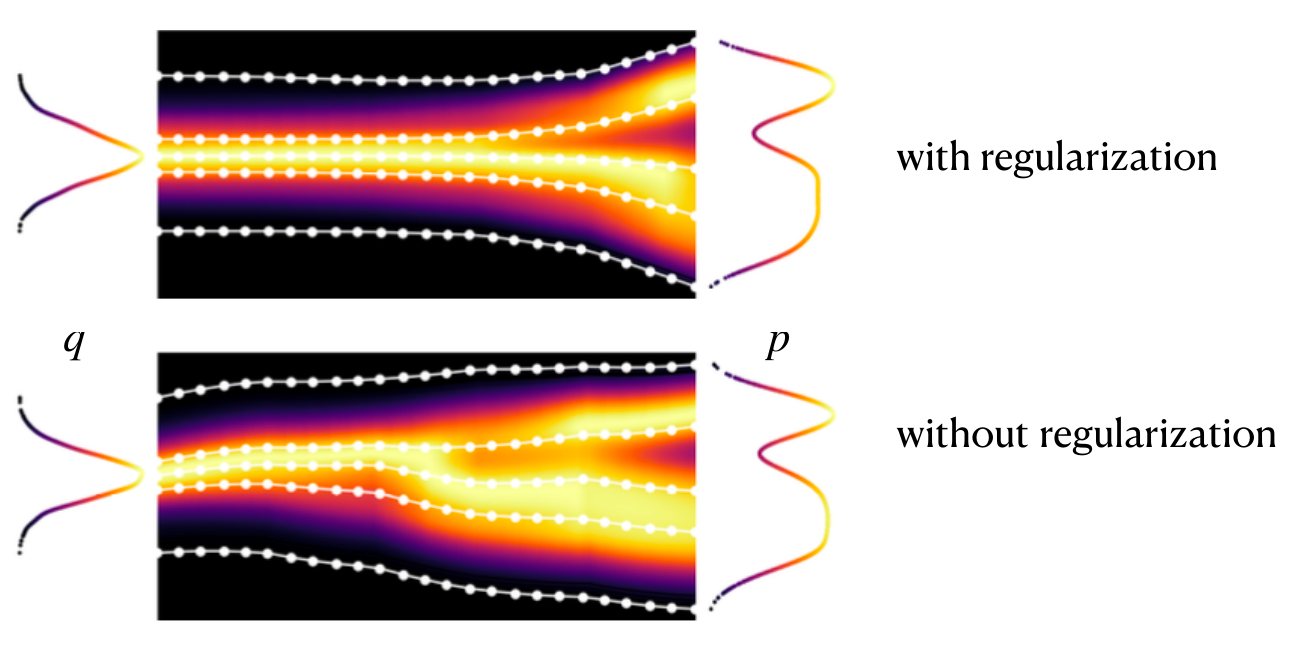}
 \caption{Trajectory of flow between \( p \) and \( q \) with and without regularization: Regularization results in ``straighter paths" for the particles, requiring fewer steps to transition from \( p \) to \( q \) and reducing the number of neural network blocks in the implementation.}
 \label{Fig:regularization}
 \vspace{-0.2in}
 \end{figure}


\vspace{-0.1in}
\subsection{Simulation-free iterative flow via local Flow Matching}

\rev{
As shown in Section \ref{subsec:jko-flow-detail},
each JKO step \eqref{eq:JKO-obj-1} aims at descending the KL-divergence using a Wasserstein proximal GD.
This can be viewed as a likelihood-based training, 
which enjoys good theoretical property at price of a computational drawback namely that the training is not simulation-free.
It would be desirable to incorporate simulation-free training, such as Flow Matching (FM), under the iterative flow framework. }
To this end, \cite{xu2024local} developed Local Flow Matching by introducing an iterative, block-wise training approach, where each step implements a simulation-free training of an FM objective. 
\rev{In each step, the source distribution $p_{n-1}$ is the current data distribution, 
and the target distribution $p_{n}^*$ is by evolving from $p_{n-1}$ along the OU process for a time step $\gamma_n > 0$. 
In other words, $x_r\sim p_{n}^*$ can be produced by 
$x_r : = e^{-\gamma_n } x_l + \sqrt{ 1-e^{-2 \gamma_n} } g $
where  $x_l \sim p_{n-1}$  and $g \sim  \calN(0,I_d)$.
The $n$-th sub-flow model, once trained, pushforards data distribution $p_{n-1}$ to $p_n \approx p_{n}^*$.
Because the OU process solves the continuous-time Wasserstein gradient flow that minimizes the KL-divergence,
we expect each step in local FM also pushes the sequence of distributions $p_n$ closer to the final density $q$, similarly as in the JKO-flow model. 
This intuition will be used in the convergence analysis below.
}

In terms of model design, the previous (global) FM model directly interpolates between noise and data distributions, which may differ significantly. 
In contrast, Local FM decomposes this transport into smaller, incremental steps, interpolating between distributions that are closer to each other, hence the name ``local.’’ 
\rev{
The Local FM model trains a sequence of invertible sub-flow models, 
which, when concatenated, transform between the data and noise distributions; a real data example is shown in Fig. \ref{Fig:LFM}. 
Thanks to the decomposition of the global trajectory, 
in each local step the two distributions to connect are closer to each other,
and as a result an FM sub-flow of a smaller model size can be used. 
Using the same overall model size, local FM shows better training efficiency 
than (global) FM by achieving lower FID with the same or fewer number of batches in SGD, see the details in  \cite{xu2024local}.}

\vspace{-0.1in}
\subsection{Theoretical analysis of generation guarantee}\label{subsec:theory-convergence}

Most theoretical results on deep generative models focus on 
score-based diffusion models (the forward process is always an SDE), 
e.g. the latest ones like \cite{
chen2024probability,li2024towards},
and (end-to-end training, global) flow models (in both forward and reverse processes) 
such as recent works: for the flow-matching model  \cite{benton2024error} and applied to probability flow ODE in score-based diffusion models;
for neural ODE models trained by likelihood maximization (the framework in \cite{grathwohl2018ffjord}) \cite{marzouk2023distribution}.

Below, we highlight a key insight in analyzing iterative flow models 
by making the connection to the convergence of Wasserstein GD.
Because the Wasserstein GD will be shown to have linear (exponential) convergence,
such analysis bounds the needed number of iterative steps $N$ (number of Residual blocks) to be $\sim \log (1/\varepsilon)$ for the flow model to achieve an $O(\varepsilon)$ amount of ``generation error'',
which is measured by the divergence between the generated distribution and the true data distribution. 
Here we follow the proof for the JKO flow model \cite{cheng2024convergence}, and similar idea has been applied to prove the convergence of Local FM \cite{xu2024local}.



To study the evolution of probability densities, we adopt the Wasserstein space as the natural setting, as it captures the geometric structure of probability distributions via transport maps. 
Recall the iterative scheme in Section \ref{subsec:jko-flow-detail} produces a sequence of distributions in the $\W$ space from data to noise and back, which we denote as the {\it forward process} and the {\it reverse process} respectively: 
\begin{equation}\label{eq:fwd-bwd-process} 
\begin{split}
\text{(forward)} \quad 
& 
p = p_0  
\xrightarrow{F_1}{p_1}  
\xrightarrow{F_2}{} 
\cdots 
\xrightarrow{F_{N}}{p_N} 
\approx q,  \\ 
\text{(reverse)} \quad 
&  p \approx
q_0  \xleftarrow{F_1^{-1}}{ q_1}
\xleftarrow{F_2^{-1}}{ }
\cdots 
\xleftarrow{F_{N}^{-1}}{ q_N}
= q,
\end{split}
\end{equation}
The density $q_0$ is the generated density by the learned flow model, and the goal is to show that $q_0$ is close to data density $p = p_0$. The proof framework consists of establishing convergence guarantees
first for the forward process and consequently for the reverse process: 

\paragraph{Forward process (data-to-noise) convergence}

\begin{itemize}
\item 
 
At each iteration of minimizing \eqref{eq:JKO-obj-2} which gives a pushforwarded $p_n$ by the learned $F_{n,\theta}$, 
 we assume that the minimization is approximately solved with the amount of error $O(\varepsilon)$ that is properly defined.
 
\item The forward convergence guarantee is by mirroring the analysis of vector space (proximal) GD for convex optimization analysis. Specifically, making use of the \( \lambda \)-convexity of \( G(\rho) := {\rm KL}(\rho \| q) \) in Wasserstein space, one can show the Evolution Variational Inequality (EVI):
   \[
        \left(1+\frac{\gamma \lambda}{2}\right)
        \mathcal{W}^2(p_{n+1}, q) 
        + 2\gamma \left( G(p_{n+1}) - G(q) \right)
        \leq 
        \mathcal{W}^2(p_n, q) + O(\varepsilon^2).
     \]

\item  The EVI is a key step to establish the exponential convergence of the Wasserstein GD and the guarantee of closeness between $p_n$ to $q$ in KL-divergence. Specifically, \( O(\varepsilon^2) \) KL-divergence is obtained after approximately \( \log(1/\varepsilon) \) JKO steps (Residual blocks).

\end{itemize}

\paragraph{Reverse process (noise-to-data) convergence}  
The convergence of the reverse process follows by the invertibility of flow map, and we utilize the following key lemma,  Bi-direction Data Processing Inequality (DPI) for $f$-divergence.
Let $p$ and $q$ be two probability distribution, such that $p$ is absolutely continuous with respect to $q$. Then for a convex function $f: [0, \infty)\rightarrow \mathbb R$ such that $f$ is finite for all $x>0$, $f(1) = 0$, and $f$ is right-continuous at 0, the $f$-divergence of $p$ from $q$ is defined as $D_f(p\|q) = \int f\left(\frac{p(x)}{q(x)}\right)q(x)dx$; for instance, KL-divergence is an $f$-divergence.
\begin{lemma}[Bi-direction DPI]\label{lemma:bi-DPI}
Let ${\rm D}_f$ be an $f$-divergence.
    If $F: \R^d \to \R^d$ is invertible 
    and for two densities $p$ and $q$  on $\R^d$,
    $F_\# p$ and $F_\# q$ also have densities, then 
    \[
    D_f(p || q)  = D_f (F_\# p ||F_\# q) . 
    \]
\end{lemma}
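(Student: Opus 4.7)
The plan is to derive the equality from two applications of the standard one-directional data processing inequality (DPI) for $f$-divergences, which states that $D_f(T_\# \mu \| T_\# \nu) \le D_f(\mu \| \nu)$ for any measurable map $T$ and probability measures $\mu, \nu$. Applying this with $T = F$ to the pair $(p, q)$ immediately yields $D_f(F_\# p \| F_\# q) \le D_f(p \| q)$. For the reverse direction I would apply DPI with $T = F^{-1}$ to the pushforward measures $F_\# p$ and $F_\# q$ (which are legitimate probability measures with well-defined $f$-divergence by the hypothesis that both admit densities); this gives
\[
D_f\bigl((F^{-1})_\# F_\# p \,\big\|\, (F^{-1})_\# F_\# q\bigr) \le D_f(F_\# p \| F_\# q).
\]
Functoriality of pushforward under composition together with $F^{-1} \circ F = \mathrm{Id}$ collapses the left-hand side to $D_f(p \| q)$, and combining the two inequalities delivers the claimed equality.

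An alternative, more explicit route is direct change of variables. Writing $r = p/q$ for the Radon--Nikodym density, invertibility of $F$ ensures that $F_\# p \ll F_\# q$ with $d(F_\# p)/d(F_\# q) = r \circ F^{-1}$. Substituting into the definition $D_f(F_\# p \| F_\# q) = \int f(r \circ F^{-1})\, d(F_\# q)$ and then invoking the pushforward change-of-variables identity $\int g\, d(F_\# q) = \int (g \circ F)\, dq$ with $g = f \circ r \circ F^{-1}$ transforms the integral into $\int f(r(x))\, dq(x) = D_f(p \| q)$.

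I do not anticipate any substantial obstacle; the lemma is short and essentially a bookkeeping exercise. The only care required is verifying that the ingredients apply cleanly: DPI needs only measurability of $F$ and $F^{-1}$, which is built into the assumption that $F$ is invertible in the intended sense, and the hypothesis that $F_\# p$ and $F_\# q$ admit densities ensures that both sides of the claimed identity are well defined (and finite or $+\infty$ together). Of the two routes, I would favor the double-DPI argument for the main text, since it avoids any appeal to Jacobians or differentiability of $F$ and isolates invertibility as the sole structural ingredient that upgrades the inequality into an equality.
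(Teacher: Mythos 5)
Your proposal is correct; the paper does not reproduce a proof but cites the result as standard, and your primary route (one-directional DPI applied to $F$ and then to $F^{-1}$, using functoriality of pushforward to collapse $(F^{-1})_{\#}F_{\#}p = p$) is exactly the standard argument found in the cited sources. Your secondary change-of-variables route is also a clean, self-contained verification consistent with the hypotheses.
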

The proof is standard and can be found in \cite{xu2024local,raginsky2016strong}. The DPI controls information loss in both forward and reverse transformations. 
Bringing these results together provides a density learning guarantee. 
Because KL-divergence is an $f$-divergence, the closeness at the end of the forward process in terms of ${\rm KL}(p_N \| q_N =q)$
directly implies the same closeness at the end of the reverse process, i.e. ${\rm KL}(p_0=p \| q_0)$; see \eqref{eq:fwd-bwd-process}.
The \( O(\varepsilon^2) \) KL control implies $O(\varepsilon)$ bound in  Total Variation (TV). 

 \begin{figure}[t]
\centering
\includegraphics[width=0.8\textwidth]{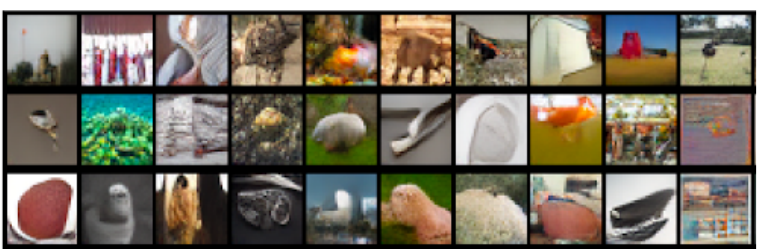}
 \caption{Example of JKO-flow generated synthetic images, 
 trained using ImageNet-32 dataset with 1,281,167 training samples. 
 Labels identified for the first 10 images (first row) by a classification algorithm are 1: `flagpole', 2: `ski', 3: `Windsor tie', 4: `Gila monster', 5: `goldfish', 6: `warthog', 7: `admiral', 8: `envelope', 9: `barn', 10: `ostrich'.
 \rev{JKO-flow model achieves an FID score of 20.1, and local FM model achieves FID 7.0 \cite{xu2024local}.}}
 \label{Fig:JKO-generation}
 \vspace{-0.1in}
 \end{figure}

\vspace{-0.1in}
\section{Applications and extensions}

In this section, we present various applications and extensions of the previously discussed flow-based generative models to problems in statistics, signal processing, and machine learning, involving a general target density and a general loss function.

\vspace{-0.1in}
\subsection{Data synthesis and evaluation metrics}

Synthetic data generation is a common application of generative models, aiming to learn complex data distributions of the training data and synthesize new data samples that follow the same distribution. In image generation, generative models effectively capture intricate patterns, textures, and structures from large datasets, enabling them to generate high-quality images that closely resemble real-world data. This capability has numerous applications in computer vision, content creation, and medical imaging, where synthetic images can enhance training datasets and enable dowstream machine learning tasks.




For data synthesis tasks, generative models are evaluated using various metrics that assess the quality, diversity, and likelihood of the generated samples. Commonly used metrics include Fréchet Inception Distance (FID), Negative Log-Likelihood (NLL), Kullback-Leibler (KL) Divergence, Log-Likelihood per Dimension, Precision and Recall for Distributions, Inception Score (IS), and Perceptual Path Length (PPL). Other metrics for comparing two sets of sample distributions can also be used to evaluate generation quality, such as the kernel Maximum Mean Discrepancy (MMD) statistic. Among these, FID and NLL are the most frequently used for assessing both perceptual quality and likelihood estimation in flow-based models. In practice, researchers often report both FID and NLL when evaluating flow-based generative models.

FID is a widely used metric that measures the distance between the feature distributions of real and generated images. It is computed as the Fr\'echet distance between two multivariate Gaussian distributions, one representing real images and the other representing generated images. Mathematically, FID is given by:  
$\|\mu_r - \mu_g\|^2 + \text{Tr}(\Sigma_r + \Sigma_g - 2(\Sigma_r \Sigma_g)^{1/2})
$
where \(\mu_r, \Sigma_r\) are the mean and covariance matrix of data in feature space, and \(\mu_g, \Sigma_g\) are the mean and covariance matrix of generated data. The first term captures differences in mean, while the second term accounts for differences in variance. Lower FID values indicate better sample quality and diversity. FID provides an empirical assessment of whether the generated samples match the real data distribution in a perceptual feature space, which is important in applications where sample quality is crucial, such as image generation.

Negative Log-Likelihood (NLL) is a direct measure of how well a generative model fits the training data distribution. 
Given a test dataset \(\{\tilde x^i\}_{i=1}^{m'}\) and a model (e.g., learned by flow model) with probability density function \(p_{\theta}(x)\), the NLL is computed as:  $
\text{NLL} = -\frac{1}{m'} \sum_{i=1}^{m'} \log p_{\theta}(\tilde x^i)
$
where \(p_{\theta}(x)\) is the model’s probability density at test sample \(\tilde x^i\). Lower NLL values indicate that the model assigns high probability to real data, meaning it has effectively captured the distribution. Conversely, higher NLL values suggest poor data fit.  Since flow-based models explicitly learn the data distribution, NLL serves as a fundamental evaluation metric, complementing perceptual metrics like FID,  but it does not always correlate with perceptual sample quality. Fig. \ref{local-flow} illustrated NLL scores for various generative models.

\begin{figure}[t]
\centering
\includegraphics[height=0.3\linewidth]{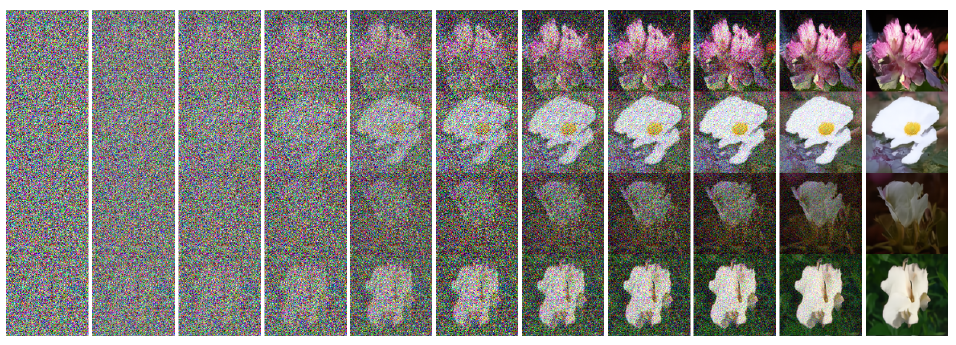}
 \caption{
 Noise-to-image trajectories of 
Oxford Flowers \citep{Nilsback2008flower} data (128 x 128)
by
 Local Flow Matching \cite{xu2024local}.}
 \label{Fig:LFM}
 \vspace{-0.1in}
 \end{figure}

\vspace{-0.1in}
\subsection{General $q$ rather than Gaussian}

In many applications, the goal is to learn a mapping from \( p \) to a general distribution \( q \) rather than restricting \( q \) to be a Gaussian. 
For instance, in image applications, generative models can interpolate between different styles, manipulate image attributes, and generate high-resolution outputs, making them powerful tools for various real-world tasks. This has significant implications for transfer learning, domain adaptation, and counterfactual analysis.


\rev{
As has been shown in Section \ref{subsec:review-FM}, there is more than one valid interpolating trajectory connecting a source density $p$ to a target density $q$.
When requiring the interpolation also minimizes the ``transport cost,'' the problem is equivalent to the optimal transport (OT) problem as revealed by the classical dynamic formulation of OT.
This task aligns with the Schr\"{o}dinger bridge (SB) problem, which seeks stochastic processes that interpolate between given marginal distributions
and can be viewed as an entropy-regularized version of the OT problem \cite{leonard2013survey}.
Many works have explored these connections to leverage OT and SB concepts in developing flow and diffusion generative models.
The formulation is also closely related to the Stochastic Control problem, which can be solved under a deep flow-model framework, e.g., in \cite{domingo2024stochastic}.
As this is a broad topic tangent to the theme of the current paper, below we briefly show one example approach based on  \cite{xu2023computing}
focusing on scaling OT computations to high-dimensional data using a (deterministic) flow model, without going into general stochastic control formulations.
A statistical application is used to estimate density ratios in high dimensions.}

\paragraph{Flow-Based Optimal Transport}


Recall that the celebrated Benamou-Brenier equation expresses OT in a dynamic setting \citep{villani2009optimal, benamou2000computational}:
 \begin{equation}\label{eq:Benamou-Brenier}
 \begin{split}
 &\mathcal T:= \inf_{\rho, v}      \int_0^1 \mathbb{E}_{x(t) \sim \rho(\cdot,t)} \| v(x(t),t) \|^2 dt  \\
 & \text{s.t.} ~~\partial_t \rho + \nabla \cdot (\rho  v) = 0, 
   ~~  \rho(\cdot,0) = p,  ~~ \rho(\cdot,1) = q,    
 \end{split}
 \end{equation}
 where \( v(x,t) \) represents the velocity field, and \( \rho(x,t) \) is the probability density at time \( t \), evolving according to the CE. Under suitable regularity conditions, the minimum transport cost \( \mathcal{T} \) in \eqref{eq:Benamou-Brenier} equals the squared Wasserstein-2 distance, and the optimal velocity field \( v(x,t) \) provides a control function for the transport process.  
\rev{
As a standard approach, see also e.g. \cite{ruthotto2020machine}, 
one can incorporate the two endpoint conditions on initial/target distributions into the cost function by }
\begin{equation}   
 \int_0^1 \mathbb{E}_{x(t)\sim\rho(t)}\|v(x(t),t)\|^2_2 dt +
 \gamma {\rm KL}(p\|{\hat p}) 
 + \gamma {\rm KL}(q\|{\hat q}), \label{eq:flow-OT}
 \end{equation}
 where \( \gamma > 0 \) is a regularization parameter enforcing terminal constraints through KL divergence terms, 
 and \( \hat p  \) and \( \hat q \) denote the transported distributions obtained via the learned velocity field using CE. 
 The symmetry and invertibility of the transport map allow us to impose constraints on both forward and reverse mappings. 
A particle-based approach can approximate the discrete-time transport cost; for instance, over a segment \( [t_0, t_1] \): 
 \[
 \int_{t_0}^{t_1} \mathbb{E}_{x(t)\sim\rho_t}\|v(x(t),t)\|^2_2 dt \approx \frac {1}{(t_1-t_0)m}\sum_{i=1}^m \|x^i(t_1)-x^i(t_0)\|_2^2.
 \]
Finally, the KL divergence terms can be estimated from particles at the two endpoints by estimating the log-likelihood function $\log q(x)/p(x)$, which can be estimated using a technique training a classification network, based on Lemma \ref{thm-dre} below.

 \begin{figure}[t]
\centering
\includegraphics[width=\textwidth]{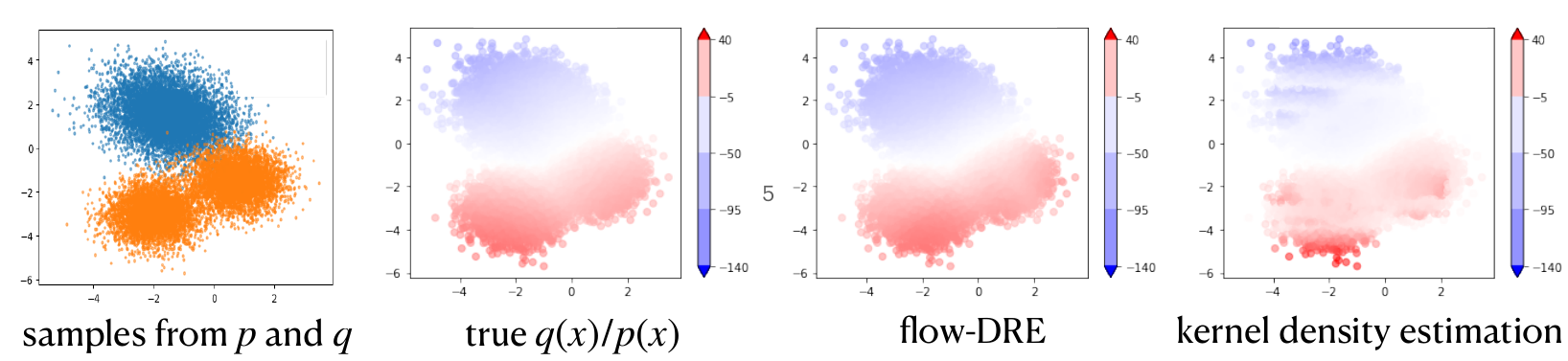}
 \vspace{-20pt}
 \caption{Density ratio between two Gaussian mixtures with very different support:  
\( p = \frac{1}{3} \mathcal{N}([-2, 2]^T, 0.75 I_2) + \frac{1}{3} \mathcal{N}([-1.5, 1.5]^T, 0.25 I_2) + \frac{1}{3} \mathcal{N}([-1, 1]^T, 0.75 I_2) \)  
and  
\( q = \frac{1}{2} \mathcal{N}([0.75, -1.5]^T, 0.5 I_2) + \frac{1}{2} \mathcal{N}([-2, -3]^T, 0.5 I_2) \),  
obtained via flow-based density ratio estimation, which provides a more accurate approximation of the true density ratio compared to standard kernel density estimation (KDE). A similar example was used in \cite{xu2023computing}; here we further compare with KDE.
} \label{Fig:DRE}
 \vspace{-0.1in}
 \end{figure}

\paragraph{Estimating Density Ratios via learned velocity fields}

The velocity field learned from \eqref{eq:flow-OT} can also be used for density ratio estimation (DRE), a fundamental problem in statistics and machine learning with applications in hypothesis testing, change-point detection, anomaly detection, and mutual information estimation. 
A major challenge in DRE arises when the supports of \( p \) and \( q \) differ significantly. To mitigate this, one technique is the telescopic density ratio estimation \cite{rhodes2020telescoping, choi2022density}, which introduces intermediate distributions that bridge between \( p \) and \( q \): Given a sequence of intermediate densities \( p_n \) for \( k=0, \dots, N \) with \( p_0 = p \) and \( p_N = q \), consecutive pairs \( (p_n, p_{n+1}) \) are chosen such that their supports are close to facilitate accurate density ratio estimation. Then the log-density ratio can then be computed via a telescopic series as:
\begin{equation}\label{eq:telescopic-dre}
 \log \frac{q(x)}{p(x)} = \log p_N(x) - \log p_0(x) = \sum_{n=0}^{N-1} \left(\log p_{{n+1}}(x) - \log p_{n}(x)\right).
 \end{equation}
 This multi-step approach improves estimation accuracy compared to direct one-step DRE.  

Using flow-based neural networks, we can learn the velocity field that transports particles between intermediate distributions and estimate density ratios at each step—for instance, by leveraging classification-based networks to distinguish between samples at consecutive time steps. The classification-based network can leads to an estimate for the log-density ratio
(using their samples) due to the following lemma,
which is straightforward to verify:
\begin{lemma}
[Training of logistic loss \rev{gives log-density ratio} under population loss and perfect training] \label{thm-dre} For logistic loss, given two distributions $f_0$ and $f_1$, let
\[\ell[\varphi] = \int \log(1+e^{\varphi(x)})f_0(x)dx+\int\log(1+e^{-\varphi(x)})f_1(x)dx,\]
Then the functional global minimizer is given by $\varphi^\star =\log(f_1/f_0)$.
\end{lemma}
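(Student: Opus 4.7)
The plan is to exploit the separability of $\ell[\varphi]$ across $x$: since the integrand at each point depends only on $\varphi(x)$ and the nonnegative weights $f_0(x), f_1(x)$, the functional minimization reduces to a one-dimensional convex problem solved pointwise. This is the standard logistic-regression-meets-Bayes argument; the main task is just to be careful about where the expression $\log(f_1/f_0)$ is well-defined.

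First I would fix $x$ with $f_0(x)>0$ and $f_1(x)>0$ and define the scalar function
\[
g(u) := \log(1+e^{u}) f_0(x) + \log(1+e^{-u}) f_1(x), \quad u \in \R.
\]
Differentiating gives
\[
g'(u) = \sigma(u)\, f_0(x) - \sigma(-u)\, f_1(x),
\]
where $\sigma(u) = e^{u}/(1+e^{u})$ is the logistic sigmoid. Setting $g'(u)=0$ and using $\sigma(-u)=1-\sigma(u)$ yields $\sigma(u) = f_1(x)/(f_0(x)+f_1(x))$, which solves to $u^\star = \log(f_1(x)/f_0(x))$. A second differentiation gives $g''(u) = \sigma(u)\sigma(-u)(f_0(x)+f_1(x)) > 0$, so $g$ is strictly convex and $u^\star$ is the unique minimizer at that point.

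Next I would assemble the pointwise minimizers into a function $\varphi^\star(x) = \log(f_1(x)/f_0(x))$ and note that, because $\ell[\varphi] = \int g_x(\varphi(x))\, dx$ with $g_x$ the scalar objective above, any measurable $\varphi$ satisfies $\ell[\varphi] \ge \ell[\varphi^\star]$, with equality only when $\varphi(x) = \varphi^\star(x)$ almost everywhere on $\{f_0>0\}\cup\{f_1>0\}$. On the null-measure set where one of $f_0, f_1$ vanishes, the integrand is insensitive to $\varphi$ in the usual sense (or can be taken to be an appropriate limit $\pm\infty$), so $\varphi^\star$ is the global minimizer almost everywhere on the relevant support.

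I do not expect any real obstacle here beyond bookkeeping: the argument is pointwise and strictly convex. The only subtle point is to state the nondegeneracy hypothesis ($f_0, f_1 > 0$ on a common support, or absolute continuity of $f_1$ with respect to $f_0$) under which $\log(f_1/f_0)$ is a well-defined real-valued function, matching the context of density-ratio estimation via a learned classifier used earlier in the text.
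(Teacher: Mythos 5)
Your proof is correct, and it is the standard pointwise-convexity argument that the paper alludes to when it calls the lemma ``straightforward to verify'' with a citation but gives no written proof. The derivative computation $g'(u)=\sigma(u)f_0(x)-\sigma(-u)f_1(x)$, the solution $\sigma(u^\star)=f_1/(f_0+f_1)$ giving $u^\star=\log(f_1/f_0)$, and the strict convexity via $g''(u)=\sigma(u)\sigma(-u)(f_0(x)+f_1(x))>0$ are all right, and your closing remark about restricting attention to the common support (or assuming $f_0,f_1>0$ there) is exactly the regularity caveat one should state.
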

An illustrative example is shown in Fig. \ref{Fig:DRE}. We remark that although this telescopic density ratio learning scheme, in principle, works with an arbitrary velocity field, an ``optimal" velocity field (e.g., one that minimizes the ``energy'') tends to be more efficient in implementation and achieves better numerical accuracy.


%
%

\begin{figure}[t]
\centering
\includegraphics[width=0.7\textwidth]{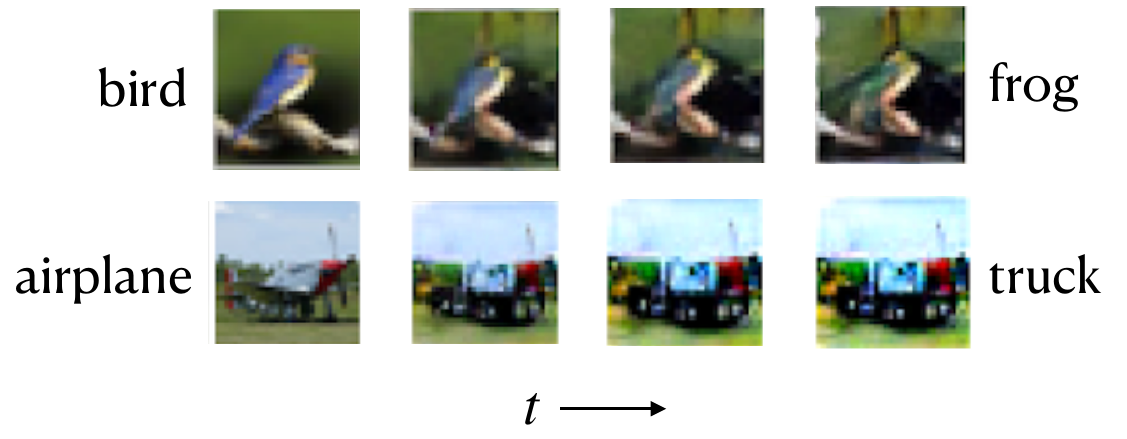}
 \vspace{-10pt}
 \caption{Illustrative examples of images generated by the adversarial sampler by solving \eqref{eq:flow_DRO}; note that after the ``worsening'' transformation, the image becomes misclassified.}
 \label{Fig:DRO}
 \vspace{-0.1in}
 \end{figure}

\vspace{-0.1in}
\subsection{Sampling from worst-case distribution}

The proposed framework can be used beyond finding distributions that match the original data distribution, but also find the worst-case distributions in distributional robust optimization (DRO) problem with application to robust learning.
\rev{
A survey of Wasserstein DRO can be found in  \cite{kuhn2019wasserstein}.
}
In this setting, the target distribution \( q \) is not pre-specified but is instead induced by a problem-specific risk function \( R(\cdot): \mathbb{R}^d \to \mathbb{R} \). 
\rev{In the so-called flow-based DRO model \cite{xu2024flow}, the problem is the solve a transport map $F: \R^d \to \R^d$ via}
\begin{equation}
 \min_{F} \mathbb{E}_{X \sim p} [R(F(X)) + \frac{1}{2\gamma} \|X - F(X)\|_2^2]
 \label{eq:flow_DRO}
\end{equation}
for a given regularization parameter $\gamma >0$.
Here, the reference measure \( p \) can be represented by a pre-trained generative model. Thus, this framework allows to adapt the pre-trained generative model from synthesize samples that follows the same distribution as the training data, to produce the worst-case samples that maximize the risk function \( R \) -- representing the unseen scenarios. 
Using a flow-based model to represent transport map \( F \), the above optimization problem can be solved by the aforementioned framework. The learned transformation \( F^* \) defines the worst-case sampler, inducing the adversarial distribution \( q=F^*_\# p \), which transforms generated samples to regions that lead to higher risk.
As an example, Fig. \ref{Fig:DRO} illustrates an adversarial sampler for classification algorithms, where the risk function \( R \) is chosen as the classification accuracy (cross-entropy loss). In this case, the generated images are mapped to cause misclassification, demonstrating how the framework can be used to generate adversarial samples from worst-case distributions in practical applications.

\section{Conclusion}

Flow-based generative models represent a compelling framework for high-dimensional probability modeling, offering advantages such as exact likelihood computation, invertible transformations, and efficient sample generation. By leveraging ODEs and optimal transport techniques, flow-based models provide a general framework for density estimation and data synthesis, making them particularly well-suited for signal processing applications. In this tutorial, we presented a mathematical and algorithmic perspective on flow-based generative models, introducing key concepts such as continuous normalizing flows (CNFs), flow-matching (FM), and iterative training via the Jordan-Kinderlehrer-Otto (JKO) scheme. Through the lens of Wasserstein gradient flows, we demonstrated how these models naturally evolve within probability space, offering both theoretical guarantees and practical scalability. Additionally, we explored extensions of flow-based models, including generalization beyond Gaussian target distributions and to include general loss for worst-case sampling via distributionally robust optimization (DRO).

There are many areas we did not cover in this tutorial. Notably, flow-based generative models can be extended for conditional generation, where the model generates data given some condition, such as class labels, textual descriptions, or auxiliary variables. This has applications in network-based conditional generation \cite{xu2022invertible,ben2024dflow}.  
\rev{Generative flow and diffusion models, including Consistency Model \cite{song2023consistency}, can also be utilized for sampling posterior \cite{chung2023diffusion,song2023loss,purohit2025consistency}.
}
Additionally, counterfactual sampling is another promising direction, allowing for the generation of synthetic samples that were not observed in real-world data but could have occurred under different causal assumptions.

By providing both theoretical insights and practical guidelines, this tutorial aims to equip researchers and practitioners with the necessary tools to develop and apply flow-based generative models in diverse domains. As these models continue to evolve, their integration with advanced optimization and machine learning techniques will further expand their impact on modern signal processing, statistical inference, and generative AI applications.



%
\vspace{-0.1in}
\section*{Acknowledgment}

This work is partially supported by NSF DMS-2134037, and the Coca-Cola Foundation.
XC is also partially supported by 
NSF DMS-2237842
and Simons Foundation (grant ID: MPS-MODL-00814643).

%

\bibliographystyle{plain}
\bibliography{references,flow}

\appendix

\section{Proofs}

The same result of  Lemma \ref{lemma:FM_loss}
was proved in \cite[Proposition 1]{albergo2023building} where it was assumed that $x_0$ and $x_1$ are independent. 
Here, we show that allowing dependence between $x_0$ and $x_1$ does not harm the conclusion.

 \begin{proof}[Proof of Lemma \ref{lemma:FM_loss}]
     We denote the {\it joint} density of $x_0, x_1$ as $\rho_{0,1}(x_0, x_1)$, and it satisfies that the two marginals are $p$ and $q$ respectively. 
     We denote $v(\cdot, t)$ as $v_t(\cdot)$ and $\rho(\cdot, t)$ as $\rho_t(\cdot)$.
 We will explicitly construct $\rho_t$ and $v_t$, 
 and then show that
 (i) $\rho_t$ indeed solves the CE induced by $v_t$, 
 and (ii) $v_t$ is valid.

 Let  $\rho_t(x)$ be the concentration of the interpolant points $I_t(x_0, x_1)$ over all possible realizations of the two endpoints, 
 that is, using a formal derivation with the Dirac delta measure $\delta$ (the mathematical derivation using Fourier representation and under technical conditions of $p$ and $q$ can be found in \cite[Lemma B.1]{albergo2023building}), we define
 \[
 \rho_t(x) : = \int_{\R^d} \int_{\R^d}  \delta( x - I_t(x_0, x_1)) \rho_{0,1}(x_0, x_1) dx_0 dx_1.
 \]
 Because $I_0(x_0,x_1) = x_0$ and $I_1(x_0,x_1) = x_1$, we know that 
 \[
 \rho_0(x) = \int \rho_{0,1}(x, x_1) dx_1 =  p(x), 
 \quad 
 \rho_1(x) = \int \rho_{0,1}(x_0, x) dx_0 =  q(x).
 \]
 By definition,
 \begin{align}
 \partial_t \rho_t (x) 
 = 
 - \int_{\R^d} \int_{\R^d} \partial_t I_t(x_0, x_1) \cdot \nabla \delta( x - I_t(x_0, x_1)) \rho_{0,1}(x_0, x_1) dx_0 dx_1 
  = -\nabla \cdot j_t(x), \label{eq:CE-jt-proof1}
 \end{align}
 where
 \[
 j_t(x): =  \int_{\R^d} \int_{\R^d} \partial_t I_t(x_0, x_1) \delta( x - I_t(x_0, x_1)) \rho_{0,1}(x_0, x_1) dx_0 dx_1.
 \]
 We now define $v_t$ to be such that 
 \[
 v_t(x) \rho_t(x) = j_t(x),
 \]
 this can be done by setting $v_t(x) = j_t(x)/\rho_t(x)$ if $\rho_t(x) > 0$ and zero otherwise. Then, \eqref{eq:CE-jt-proof1} directly gives that $\partial_t \rho_t = - \nabla \cdot ( \rho_t v_t  )$ which is the CE. This means that $v_t$ is a valid velocity field.

 To prove the lemma, it remains to show that the loss \eqref{eq:fm_loss} can be equivalently written as \eqref{eq:loss-FM-2}. 
 To see this, note that \eqref{eq:fm_loss}  can be written as 
 \begin{equation}\label{eq:loss-FM-1}
 L(\hat v) =  \int_0^1 l( \hat v , t ) dt,
 \quad
l( \hat v , t ) := \E_{x_0, x_1} \| \hat v_t ( I_t (x_0, x_1)) - \partial_t I_t( x_0, x_1) \|^2.
\end{equation}
For a fixed $t$, 
 \begin{align*}
 l( \hat v , t ) 
 & =  \int_{\R^d} \int_{\R^d} \| \hat v_t ( I_t (x_0, x_1)) - \partial_t I_t( x_0, x_1) \|^2 \rho_{0,1}(x_0, x_1) dx_0 dx_1  \\ 
 & = \int_{\R^d} \int_{\R^d} \int_{\R^d} \| \hat v_t (x) - \partial_t I_t( x_0, x_1) \|^2 
 \delta( x - I_t(x_0, x_1))  \rho_{0,1}(x_0, x_1) dx_0 dx_1 dx \\
 & = 
 c_1(t) + \int_{\R^d} \int_{\R^d} \int_{\R^d} 
 ( \| \hat v_t (x)\|^2 - 2 \hat v_t (x) \cdot \partial_t I_t( x_0, x_1)  )
 \delta( x - I_t(x_0, x_1))  \rho_{0,1}(x_0, x_1) dx_0 dx_1 dx,
 \end{align*}
 where
 \[
 c_1(t):= \int_{\R^d} \int_{\R^d} 
  \|  \partial_t I_t( x_0, x_1)  \|^2  
  \rho_{0,1}(x_0, x_1) dx_0 dx_1,
 \]
 and $c_1(t)$ is independent from $\hat v$.
 We continue the derivation as
 \begin{align*}
  l( \hat v , t )  - c_1(t)
 & = 
 \int_{\R^d}
 \| \hat v_t (x)\|^2
  \int_{\R^d} \int_{\R^d} 
 \delta( x - I_t(x_0, x_1))  \rho_{0,1}(x_0, x_1) dx_0 dx_1 dx \\
 & ~~~~~~
 - 2 \int_{\R^d} 
  \hat v_t (x) \cdot 
  \int_{\R^d} \int_{\R^d}  \partial_t I_t( x_0, x_1)  
 \delta( x - I_t(x_0, x_1))  \rho_{0,1}(x_0, x_1) dx_0 dx_1 dx \\
 & = \int_{\R^d}
         \| \hat v_t (x)\|^2 \rho_t(x) dx
         - 2 \int_{\R^d} 
         \hat v_t (x) \cdot j_t(x) dx \\
 & =  \int_{\R^d}
         ( \| \hat v_t (x)\|^2 
         - 2 \hat v_t (x) \cdot v_t(x) ) \rho_t(x) dx \\
& =   \int_{\R^d}
         \| \hat v_t (x) -  v_t (x)\|^2 \rho_t(x) dx 
         - \int_{\R^d} \| v_t (x)\|^2   \rho_t(x) dx,         
  \end{align*}
and then, by defining \[
c_2(t): = \int_{\R^d} \| v_t (x)\|^2   \rho_t(x) dx,
\] 
which is again  independent from $\hat v$, we have
  \[
    l( \hat v , t )  = \int_{\R^d}  \| \hat v_t (x) -  v_t (x)\|^2 \rho_t(x) dx  +  c_1(t)  - c_2(t).
  \]
  Putting back to \eqref{eq:loss-FM-1} we have proved the lemma, where the constant 
  $
  c =  \int_0^1 (c_1(t) - c_2(t)) dt$.
 \end{proof}

\end{document}